\documentclass[acmsmall,screen,nonacm]{acmart}
\setcopyright{none}
\makeatletter
\@ifundefined{hyxmp@parse@acmart}{}{\let\hyxmp@parse@acmart\relax}
\makeatother
\usepackage{amsmath,mathtools,booktabs,tabularx,array,enumitem,microtype}
\usepackage{tikz}
\usetikzlibrary{arrows.meta,positioning,fit,calc}
\hypersetup{keeppdfinfo=true,hidelinks,pdftitle={Verifying Text-to-SQL Queries under Changing Business Semantics},pdfauthor={Genliang Zhu; Xiaobo Chen; Ziyuan Wang; Chu Wang},pdfcreator={LaTeX}}
\setlist[itemize]{leftmargin=1.25em}
\setlist[enumerate]{leftmargin=1.5em}
\newcolumntype{Y}{>{\raggedright\arraybackslash}X}
\newcommand{\execute}{\operatorname{Exec}}
\newcommand{\metricvalue}{\operatorname{Metric}}
\newcommand{\nonconf}{\textsc{Nonconforming}}
\newcommand{\notrefuted}{\textsc{Not Refuted}}
\newcommand{\inconclusive}{\textsc{Inconclusive}}
\newcommand{\bounded}{\textsc{Bounded Conformance}}
\newtheorem{proposition}{Proposition}
\newif\ifControlledResults
\ControlledResultstrue
\newcommand{\ControlledCases}{24}
\newcommand{\ControlledLedgerChecks}{912}
\newcommand{\ControlledReferenceMatches}{792}
\newcommand{\ControlledReferenceChecks}{792}
\newcommand{\ControlledInvalidRejected}{18}
\newcommand{\ControlledBankLedgers}{38}
\newcommand{\ControlledClusters}{14}
\newcommand{\ControlledBad}{10}
\newcommand{\ControlledFiniteAgreement}{14}
\newcommand{\ControlledUnknown}{0}
\newcommand{\ControlledGenericDetected}{6}
\newcommand{\ControlledGuidedDetected}{10}
\newcommand{\ControlledIdenticalSets}{0}
\newcommand{\ControlledIntersectionTotal}{121}
\newcommand{\ControlledUnionTotal}{263}

\newcommand{\NatTasks}{96}
\newcommand{\NatCells}{144}
\newcommand{\NatBad}{104}
\newcommand{\NatAgree}{33}
\newcommand{\NatUnresolved}{7}
\newcommand{\NatLatent}{43}
\newcommand{\NatSnapshot}{61}
\newcommand{\NatGeneric}{95}
\newcommand{\NatGuided}{101}

\newcommand{\NatPool}{102}

\newcommand{\NatJudgeGptFR}{29}

\newcommand{\NatJudgeGemmaFR}{32}
\newcommand{\NatCarriedBad}{42}
\newcommand{\NatCarriedCells}{48}

\newcommand{\NatCoincideSemBad}{19}

\newcommand{\NatCoincideSemSnapshot}{8}
\newcommand{\NatGenCalls}{118}
\newcommand{\NatRepairs}{22}
\newcommand{\NatJudgeCalls}{288}

\newcommand{\NatCrossAgree}{4752}
\newcommand{\NatCrossCompared}{4752}
\newcommand{\NatChecks}{1478}
\newcommand{\NatGuidedVsGeneric}{5.8 [2.0, 9.7]}
\newcommand{\NatGuidedVsSnapshot}{38.5 [25.3, 51.5]}
\newcommand{\NatGenericVsSnapshot}{32.7 [19.4, 46.4]}
\newcommand{\NatClusters}{23}
\newcommand{\NaturalLabelTable}{\begin{table}[t]\centering\small
\caption{Independent finite-suite labels of final natural SQL on the 24 frozen test families (374 legal ledgers per family; test families were not used for prompt tuning). ``D0-latent'' counts nonconforming cells whose value agrees with the definition on the current snapshot.}\label{tab:natural-labels}
\begin{tabular}{llrrrrr}\toprule Model & Cell & Cells & Nonconf. & D0-latent & Agree & Unres.\\\midrule
gpt-oss-20b & old$\to$v1 & 24 & 20 & 10 & 3 & 1\\
gpt-oss-20b & old$\to$v2 & 24 & 22 & 9 & 1 & 1\\
gpt-oss-20b & new$\to$v2 & 24 & 17 & 10 & 6 & 1\\
gemma-4-12b & old$\to$v1 & 24 & 14 & 4 & 8 & 2\\
gemma-4-12b & old$\to$v2 & 24 & 20 & 6 & 2 & 2\\
gemma-4-12b & new$\to$v2 & 24 & 11 & 4 & 13 & 0\\
\midrule All & all & 144 & 104 & 43 & 33 & 7\\\bottomrule\end{tabular}\end{table}}
\newcommand{\NaturalDetectionTable}{\begin{table}[t]\centering\small
\caption{Detection of labeled nonconforming cells by cell type and false rejection of finite-suite-agreement cells (test split). Ledger arms never reject an agreeing cell by construction of replayable evidence; judges can.}\label{tab:natural-detection}
\begin{tabular}{lrrrrr}\toprule Method & old$\to$v1 & old$\to$v2 & new$\to$v2 & All & False rej.\\\midrule
Nonconforming / agreeing cells & 34 & 42 & 28 & 104 & 33\\\midrule
Current snapshot $D_0$ & 20 & 27 & 14 & 61 & 0\\
Generic legal (8) & 30 & 42 & 23 & 95 & 0\\
Obligation-guided (8) & 33 & 42 & 26 & 101 & 0\\
Current-obligation (8) & 33 & 42 & 26 & 101 & 0\\
Equal-weight (8) & 33 & 42 & 26 & 101 & 0\\
Whole selection pool (74) & 33 & 42 & 27 & 102 & 0\\
LLM judge gpt-oss-20b & 33 & 42 & 28 & 103 & 29\\
LLM judge gemma-4-12b & 34 & 42 & 28 & 104 & 32\\
\bottomrule\end{tabular}\end{table}}

\newcommand{\NatNoPrevGuided}{101}
\newcommand{\NatNoPrevGeneric}{96}
\newcommand{\NatNoPrevContrast}{4.8 [0.9, 10.0]}
\newcommand{\NatSampCells}{432}
\newcommand{\NatSampTasks}{288}
\newcommand{\NatSampBad}{305}
\newcommand{\NatSampAgree}{109}
\newcommand{\NatSampUnres}{18}
\newcommand{\NatSampLatent}{129}
\newcommand{\NatSampSnapshot}{176}
\newcommand{\NatSampGeneric}{266}
\newcommand{\NatSampGuided}{285}

\newcommand{\NatSampGuidedVsGeneric}{6.2 [2.9, 10.0]}
\newcommand{\NatSampGuidedVsSnapshot}{35.7 [28.5, 44.0]}
\newcommand{\NatSampCalls}{353}
\newcommand{\NatSampGroups}{144}
\newcommand{\NatSampMixed}{65}
\newcommand{\NatSampAllBad}{67}
\newcommand{\NatSampNoneBad}{12}

\newcommand{\NatSampMajorityMatch}{102}

\newcommand{\NatJGptCheckDzeroDet}{68}
\newcommand{\NatJGptCheckDzeroFR}{1}

\newcommand{\NatJGemmaCheckFR}{25}

\newcommand{\NaturalJudgeTable}{\begin{table}[t]\centering\small
\caption{LLM-judge variants on the 144 test cells (104 nonconforming, 33 agreeing). Variants were tuned on development families only; the primary variant per judge was fixed by a written dev rule (maximum detection minus false-rejection rate, ties to the simpler prompt) before test verdicts were run. Ledger checks are shown for reference.}\label{tab:natural-judges}
\begin{tabular}{lrrrl}\toprule Reviewer & Detected & False rej. & Abstain & Dev-selected\\\midrule
gpt-oss-20b, base prompt & 103 & 29 & 0 & \\
gpt-oss-20b, checklist & 48 & 4 & 0 & \\
gpt-oss-20b, checklist + $D_0$ execution & 68 & 1 & 0 & yes\\
gemma-4-12b, base prompt & 104 & 32 & 0 & \\
gemma-4-12b, checklist & 100 & 25 & 4 & yes (tie)\\
gemma-4-12b, checklist + $D_0$ execution & 102 & 24 & 3 & \\
\midrule Current snapshot $D_0$ & \NatSnapshot{} & 0 & -- & \\ Obligation-guided (8 ledgers) & \NatGuided{} & 0 & -- & \\
\bottomrule\end{tabular}\end{table}}

\newcommand{\ConfFamilies}{32}
\newcommand{\ConfCells}{192}
\newcommand{\ConfBad}{144}
\newcommand{\ConfAgree}{36}
\newcommand{\ConfUnresolved}{12}
\newcommand{\ConfSnapshot}{81}
\newcommand{\ConfGeneric}{127}
\newcommand{\ConfGuided}{137}
\newcommand{\ConfDifference}{6.9}
\newcommand{\ConfCI}{[2.1, 13.6]}
\newcommand{\ConfP}{0.0156}
\newcommand{\ConfLatent}{63}
\newcommand{\ConfGenCalls}{153}
\newcommand{\ConfRepairs}{25}
\newcommand{\ConfTruncated}{3}
\newcommand{\ConfCompareSQL}{6336}
\newcommand{\ConfCarriedBad}{56}
\newcommand{\ConfCarriedCells}{64}

\title{Verifying Text-to-SQL Queries under Changing Business Semantics}
\author{Genliang Zhu}
\affiliation{\institution{Accentrust}\city{Vancouver}\country{Canada}}
\affiliation{\institution{Georgia Institute of Technology}\department{School of Computer Science}\city{Atlanta}\state{Georgia}\country{USA}}
\email{research@accentrust.com}
\author{Xiaobo Chen}
\affiliation{\institution{University of Massachusetts Amherst}\city{Amherst}\state{Massachusetts}\country{USA}}
\author{Ziyuan Wang}
\affiliation{\institution{Duke University}\city{Durham}\state{North Carolina}\country{USA}}
\author{Chu Wang}
\affiliation{\institution{Accentrust}\city{Vancouver}\country{Canada}}
\affiliation{\institution{University of Illinois Urbana-Champaign}\department{Department of Computer Science}\city{Urbana}\state{Illinois}\country{USA}}

\begin{document}
\begin{abstract}
In text-to-SQL and AI-agent analytics, successful SQL execution does not establish semantic correctness under changing business definitions. We study revalidation of fixed queries after changes to refund attribution, tax treatment, eligible populations and exchange-rate dates. An executable verification profile binds the requested revision and knowledge cutoff to legal counterexample data and exact monetary comparison. A preregistered confirmation on \ConfFamilies{} newly authored revision families and two local producer bundles retains \ConfCells{} query--definition cells: \ConfBad{} nonconforming, \ConfAgree{} finite-suite agreeing and \ConfUnresolved{} unresolved. Of the nonconforming cells, \ConfLatent{} agree on the current snapshot. At an eight-ledger budget, obligation-guided checking detects \ConfGuided{} discrepancies versus \ConfGeneric{} for generic ordering, a paired difference of \ConfDifference{} percentage points (family bootstrap 95\% interval \ConfCI; exact family sign-flip $p=\ConfP$). Every ledger arm rejects zero of the \ConfAgree{} agreeing cells. Current-obligation and equal-weight scoring match the guided result, identifying coverage rather than revision-specific priority as the observed source of benefit. Controlled assays, a natural-query pilot and repeated samples provide separate mechanism and robustness evidence. A legal discrepancy refutes conformance; agreement remains restricted to the recorded suite. The study establishes an auditable business-semantic revalidation profile within its authored schema and local-model scope.
\end{abstract}
\keywords{text-to-SQL, SQL verification, semantic correctness, business semantics, metric definitions, counterexample testing, semantic drift, AI agents}
\maketitle

\section{Introduction}
Natural-language analytics translates a business question into a query and presents its result. Enterprise questions frequently depend on knowledge outside a schema: which transactions count, when a refund is recognized, whether an amount includes tax, and which date selects an exchange rate. EntSQL directly studies enterprise knowledge grounding~\cite{liao2026entsql}. Spider 2.0 studies realistic enterprise text-to-SQL workflows~\cite{lei2025spider2}, and BIRD emphasizes database content and efficient execution~\cite{li2023bird}. These capabilities are necessary for useful analytics, but successful execution alone does not establish that a query implements the requested definition.

Consider a January report with knowledge available through February. One approved definition subtracts refunds in their occurrence month; another explicitly restates the original sale month. A database with no refunds yields the same January value under both definitions. A legal February refund against a January sale separates them. A query filtering refunds to January can therefore agree with the new definition on the current database while implementing the old definition. This concerns query meaning, not a claim that the current numeric answer has already become wrong.

Definition updates make the distinction operational. Historical reports, cached SQL, and ad-hoc compositions may survive an update. A new revision does not make every old query invalid: changes can affect an unrelated metric or documentation only, and a user may explicitly request a historical definition. Revalidation must identify the requested interpretation and admissible data before declaring a discrepancy.

Counterexample databases and SQL equivalence evaluation are established research areas. Distilled test suites~\cite{zhong2020testsuite}, XData~\cite{chandra2015xdata}, ParSEval~\cite{chen2025parseval}, and SpotIt~\cite{klopfenstein2026spotit} address accidental agreement. SpotIt+ incorporates database constraints~\cite{tremante2026spotitplus}. Building on these foundations, we study how explicit business-definition transitions, temporal applicability and business legality determine revalidation of an already fixed query.

The contribution is business-semantic revalidation of an already fixed query, with explicit evidence about what each tested component adds:
\begin{enumerate}
\item A correctness contract binds query, requested business revision, event time, knowledge cutoff and legal input domain. It preserves valid historical requests and distinguishes a replayable discrepancy from finite agreement or an unresolved result.
\item An executable temporal/monetary profile constructs legal test obligations from approved definitions and checks them against a separately implemented evaluator. An old/new query--definition matrix distinguishes semantic revisions, documentation-only updates and unrelated metrics.
\item An empirical decomposition separates generic ordering, current-definition coverage and revision-priority scores. A natural-query pilot, repeated samples and a registered new-family confirmation show coverage benefits under a common budget; matching current-only and equal-weight results identify the measured component.
\end{enumerate}
The profile integrates established counterexample testing, exact arithmetic, constraint-aware verification and greedy coverage around the business-revision contract. Table~\ref{tab:claim-evidence} maps its contract, implementation and empirical findings to their evidence. In the controlled assay the coverage orders coincide; in the confirmation their eight-ledger detection counts coincide, while shorter prefixes can differ.

Direct compilation remains an important alternative. If a trusted metric layer completely expresses a request, executing its approved definition is preferable to asking a model to rediscover it. Our setting concerns existing SQL, heterogeneous interfaces, and compositions requiring revalidation. Native metric compilation and checking an already fixed query are therefore distinct evaluation paths.

\begin{table}[t]
\caption{Claims, executed evidence and interpretation. Qualification and future designs do not enlarge the empirical denominator.}
\label{tab:claim-evidence}
\centering\small
\begin{tabularx}{\linewidth}{p{0.20\linewidth}YY}
\toprule
Claim & Executed evidence & Interpretation\\
\midrule
Revision-bound conformance & Separate policy evaluator, legality checks and old/new query--definition matrix & Legal discrepancy refutes the requested profile; passing tests do not prove full-domain conformance.\\
Coverage component & Controlled decomposition; natural-query pilot and repeated samples & Current-only/equal-weight results distinguish coverage from priority weighting.\\
Local confirmation & 32 new families, two bundles, 192 cells; eight-ledger paired comparison & 137/144 versus 127/144 detection; authored-schema/local-bundle scope.\\
Native support & Original ParSEval, PV-SQL and MetricFlow qualification paths & Support measurements, not an effectiveness ranking of complete systems.\\
\bottomrule
\end{tabularx}
\end{table}

\paragraph{Relation to strong alternatives.}
Distilled suites and ParSEval test SQL equivalence using generated databases; SpotIt+ searches for distinguishing instances under database constraints~\cite{zhong2020testsuite,chen2025parseval,tremante2026spotitplus}. This profile treats the approved business definition and its temporal applicability as explicit verification inputs for an existing query. The approaches can complement one another: a qualified reference compilation and business-constraint mapping could connect the revision profile to native equivalence checking. Reported native outcomes are limited to the interfaces actually qualified; an unexecuted mapping remains an integration proposal, rather than a baseline failure.

\section{Background and Related Work}
\subsection{Enterprise SQL and governed metrics}
EntSQL makes enterprise knowledge an explicit input~\cite{liao2026entsql}; Spider 2.0 expands the workflow setting~\cite{lei2025spider2}; BIRD supplies heterogeneous database grounding~\cite{li2023bird}. DIN-SQL decomposes generation and self-correction~\cite{pourreza2023dinsql}. These studies inform query production and external data selection. They do not, by themselves, provide labels for every business-policy revision introduced by another study.

Governed definitions reduce informal interpretation. GROUND uses structured definitions and checks~\cite{pillai2026ground}; semantic-layer-mediated agents use an intermediate representation and compilation~\cite{kim2026semanticagent}. These approaches motivate explicit population, grain, amounts, dates, and dependencies, with metadata for temporal applicability and verification scope.

A semantic interchange format has a schema-format version; a metric has a business revision. A format upgrade can leave the metric unchanged, while a policy change can occur without changing the format. The requested business revision determines verification. These fields must not be conflated.

\subsection{Tests, equivalence, and constraints}
Distilled test suites evaluate SQL on multiple databases~\cite{zhong2020testsuite}. XData distinguishes query mutations~\cite{chandra2015xdata}; ParSEval uses plan-aware generation~\cite{chen2025parseval}. SpotIt uses formal verification~\cite{klopfenstein2026spotit}. SpotIt+ v3 combines bounded equivalence, configurable domain constraints, and distinguishing databases~\cite{tremante2026spotitplus}. Our development profile studies approved revisions and knowledge-time/event-consumption contrasts. The experiments separate budget-matched authored verification controls from native-tool qualification; a native comparative evaluation requires the same temporal and business constraints.

Cosette~\cite{chu2017cosette}, U-semiring~\cite{chu2018usemiring}, and VeriEQL~\cite{he2024verieql} distinguish equivalence, counterexamples, and bounded reasoning. Bag multiplicities, NULL, ordering, and integrity constraints affect results. A finite testing budget cannot inherit an unbounded guarantee merely because a solver is present.

\subsection{Verification and repair}
SQLDriller studies execution consistency and correction~\cite{yang2025sqldriller}. PV-SQL combines database probing with deterministic checks and refinement~\cite{tian2026pvsql}. Its verifier is relevant to fixed-query checks; its full probing and repair path is a different arm with additional costs.

TraceSQL estimates answerability without gold SQL or reference execution results at inference, while using business and schema signals~\cite{shukla2026tracesql}. Our procedure instead uses approved executable reference semantics, which specify a metric without requiring manually authored gold SQL for every query. The scientific evaluator and the operational reference are implemented separately.

\begin{table}[t]
\caption{Closest-work boundaries and evaluation obligations. Established components are credited rather than presented as new foundations.}
\label{tab:related}
\begin{tabularx}{\linewidth}{@{}lYY@{}}\toprule
Work & Established object & Obligation here \\\midrule
EntSQL / Spider 2.0 & Enterprise knowledge and workflows & Explicit revision interventions \\
Metric-layer agents & Governed representation and compilation & Strong native compilation path \\
XData / ParSEval & Distinguishing test databases & Same constraints and budget \\
SpotIt / SpotIt+ & Verification and constraints & Temporal and business controls \\
Cosette / VeriEQL & Equivalence and bounded reasoning & Explicit verification scope \\
PV-SQL / SQLDriller & Checks, probing, and correction & Fixed checks versus full repair \\
TraceSQL & Reference-free estimation & Executable reference dependence \\\bottomrule
\end{tabularx}
\end{table}

\section{Problem Definition}
\subsection{Requests and temporal applicability}
A request $a$ contains a metric identifier, explicit business revision $v$, report interval $[t_0,t_1)$, knowledge cutoff $k$, and applicable restatement policy. The interval includes its start and excludes its end. The approved definition $C_v$ specifies grain, population, statuses, event-time basis, amounts, currencies, exchange-rate date, and rounding. A historical revision remains valid when explicitly requested; verification does not silently substitute the latest revision.

Knowledge time and event time are separate. An event contributes only after occurring and being recorded by $k$. A refund known after the report period can alter that period when the approved definition permits restatement. Otherwise, later information cannot be assumed to revise a historical answer.

Ambiguity or revision mismatch yields an unresolved outcome. Business approval of a definition is a trust premise, not a conclusion from agreement between implementations. The study uses explicit research policies rather than claiming universal accounting treatment.

\subsection{Legal data and execution}
Let $\mathcal{D}(C_v,a)$ be databases satisfying schema and business constraints. Legality requires more than keys: refunds reference transactions and obey amount and funding bounds; canceled sales cannot create collected or refunded funds; rates cover consumed currency/date pairs; money and rounding are exact.

The execution profile fixes dialect, supported SQL, grain, bag/ordering semantics, NULL handling, and comparison. Scalar money uses base-currency minor units. Each event is converted using a positive rational rate, rounded half-up to a minor unit, and then added or subtracted. Rounding the final sum is a different policy.

A warehouse can contain future or late-recorded events while remaining legal. The cutoff excludes such events from consumption, rather than rejecting a normal historical query over a current warehouse.

\subsection{Correctness objects}
Let $\execute(q,D)$ be deterministic query execution and $\metricvalue(C_v,a,D)$ the approved metric. The relation $\simeq$ is a fixed result comparison. Current-answer agreement is
\begin{equation}
\execute(q,D_0)\simeq\metricvalue(C_v,a,D_0).
\label{eq:current}
\end{equation}
Query conformance is stronger:
\begin{equation}
\forall D\in\mathcal{D}(C_v,a),\quad \execute(q,D)\simeq\metricvalue(C_v,a,D).
\label{eq:conformance}
\end{equation}
A counterexample is a legal $D^\star$ where comparison fails. Agreement on a finite $\mathcal{T}\subseteq\mathcal{D}(C_v,a)$ establishes agreement on $\mathcal{T}$, not Equation~\eqref{eq:conformance} on the full domain.

\begin{proposition}
If the metric evaluator, legality check, execution, and comparison faithfully implement their semantics, a replayable legal instance with unequal results refutes conformance under the requested revision.
\end{proposition}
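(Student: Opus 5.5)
The argument is a direct negation of the universal statement in Equation~\eqref{eq:conformance}, with the faithfulness hypotheses used to move from what the implementations report to what the semantics say. Fix the request $a$, the requested revision $v$ and the query $q$, and let $D^\star$ be the replayable instance. The evidence consists of four recorded verdicts. The legality check accepts $D^\star$. The executor returns a value $r_q$. The separately implemented evaluator returns a value $r_C$. The comparison reports $r_q\not\simeq r_C$. Replayability matters because these four verdicts are deterministic functions of the recorded artifacts: $q$, the full contents of $D^\star$, $a$ (including the cutoff $k$ and the restatement policy) and $C_v$. Rerunning them therefore reproduces the same outcome, so the verdicts are facts about $D^\star$ and not about a transient run.

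I then apply each faithfulness premise in turn to convert an implementation verdict into a semantic fact.
\begin{enumerate}
\item Faithful legality gives $D^\star\in\mathcal{D}(C_v,a)$: schema constraints, refund references and bounds, cancellation rules, rate coverage and exact money all hold. This uses the domain for the requested revision $v$, not the latest one.
\item Faithful execution under the fixed profile (dialect, bag and ordering semantics, NULL handling) gives $r_q=\execute(q,D^\star)$.
\item Faithful evaluation gives $r_C=\metricvalue(C_v,a,D^\star)$, including per-event half-up rounding and knowledge-cutoff filtering.
\item Faithful comparison gives: the check reports a mismatch exactly when $\execute(q,D^\star)\not\simeq\metricvalue(C_v,a,D^\star)$.
\end{enumerate}
Together these yield a $D^\star\in\mathcal{D}(C_v,a)$ at which the comparison fails. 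That is exactly a counterexample in the paper's sense, so the universal statement in Equation~\eqref{eq:conformance} is false for $q$ under $C_v$ and $a$.

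The main obstacle is not the logic but binding the instance to the correct revision and request. If the legality check or the evaluator were silently run against another revision $C_{v'}$ or another cutoff, the discrepancy would refute the wrong statement. I will handle this by making $v$, $[t_0,t_1)$, $k$ and the restatement policy part of the replayed record. I will also note that the same $a$ indexes both $\mathcal{D}(C_v,a)$ and $\metricvalue(C_v,a,\cdot)$ in Equation~\eqref{eq:conformance}, so the four faithfulness premises all apply to the same request.

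A secondary point is that $\simeq$ must be a fixed relation. It cannot be tuned after the results are seen, otherwise the mismatch reported by the comparison would not correspond to the relation named in Equation~\eqref{eq:conformance}.

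Finally, the argument is one-directional. Nothing here says that agreement on a finite suite $\mathcal{T}$ establishes conformance, which is consistent with the remark preceding the proposition.
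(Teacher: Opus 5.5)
Your proposal is correct and follows the same argument as the paper: conformance in Equation~\eqref{eq:conformance} quantifies over all of $\mathcal{D}(C_v,a)$, and the faithfulness premises turn the replayed verdicts into a legal instance that violates $\simeq$, which is exactly the witness the paper's proof invokes. Your extra care about binding $v$, the cutoff $k$ and the request to the replayed record, and about fixing $\simeq$ in advance, makes explicit what the paper's one-line proof leaves implicit in its hypotheses.
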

\begin{proof}
The conformance statement quantifies over every legal instance. The stated instance belongs to that domain and violates the required comparison.
\end{proof}
This applies the standard witness argument to the requested business revision. Admissibility is required for a valid rejection, and a search that finds no discrepancy retains its finite verification scope.

\subsection{Definition transitions}
A transition creates obligations relative to the requested interpretation. Refund attribution can affect refund queries; unrelated metrics and documentation changes may not. A change in legal-data constraints is distinct from a change in the metric expression. A database admissible under only one version cannot automatically serve as a common version witness.

The executed development matrix crosses old/new authored reference queries with both definitions for nine transitions or controls on the same legal ledger bank. It includes historical requests, unrelated changes, and metadata-only revisions. Its monetary and structural legality profile remains unchanged: it does not test a change in database constraints or naturally migrated enterprise queries. A changed identifier can invalidate evidence binding, but does not prove that the query's meaning is wrong.

\section{Verification Procedure}
\subsection{Normalization and test obligations}
The implemented scalar profile represents approved population, event attribution, amount, FX, and rounding with typed fields. A broader representation can also express grain and dependencies, but their general extraction is not implemented here. SQL execution reuses SQLite's native parser and authorizer; these safety checks are distinct from SQL-AST normalization.

An obligation identifies a difference that data can expose. Sales versus receipts separates sale/payment dates or permits partial payment. Refund attribution places a funded refund across a boundary. Tax creates distinct gross/net values. FX supplies different approved rates on relevant dates. Composed obligations preserve the same funding, amount, and temporal rules.

A candidate pool can be prepared from definitions before test SQL is seen. The revised development implementation synthesizes matched amount, population, event-date, equal/unequal-rate, rounding, and knowledge-cutoff ledgers from public current/previous policies and requests. It adds shared base snapshots and a deterministic legal sampler, but does not import the authored mutations' witness ledgers. Selection reads neither query text, mutation tags, family identifiers nor scientific labels. The templates remain author-designed development constructions, not independent external tasks. Query-guided search is a separate, unimplemented extension.

\subsection{Budgeted legal tests}
The selector evaluates supported public policy contrasts on each candidate ledger. An obligation is covered only if consumed metric values differ: an unused date's changed rate is insufficient for an FX obligation. Single-field old/new contrasts and a complete previous-policy contrast receive priority; alternate policy, cutoff, endpoint, per-event rounding, and consumed-sale multiplicity diagnostics provide general coverage. A joint contrast retains coupled changes that cannot form a supported single-field policy. These operational evaluations do not use the scientific evaluator and their wall time is charged to selection.

Greedy selection assigns four units to an uncovered changed-policy obligation and one to another uncovered obligation, accounting for coverage in the base snapshot. A fixed hash resolves ties. The original seed and 4/1 weights are retained, rather than fitted to favorable outcomes. Two matched ablations use the identical bank, budget, tie rule and operational capabilities: current-obligation coverage substitutes the current policy for the previous policy only in obligation scoring, removing changed contrasts from that score; equal-weight revision coverage retains the contrasts but assigns every obligation weight one. The common bank is built from both public policies, and the shared tie key still encodes both policies and the request. Thus neither ablation removes previous-definition information from the entire pipeline. Their order and execution outcomes are recorded separately. This fixed-count procedure has no cost-optimality guarantee. Collection includes selected-set overlap, available versus uncovered obligations, missed discrepancies, and nested one/two/four/eight-ledger execution prefixes. Budget curves reaggregate charged runs, rather than new free searches.

Legality is checked before execution. Invalid candidates retain their cost and reason. An unfunded refund, missing exchange rate, or illegal duplicate is not a successful semantic detection.

\subsection{Read-only execution and decisions}
Execution uses isolated synthetic or licensed research databases, read-only access, bounded time, and an explicit operation policy. Parsing errors, unsupported outputs, timeouts, and non-scalar results are recorded rather than translated to zero.

The verifier returns \nonconf{} with a replayable discrepancy, \notrefuted{} with finite scope, or \inconclusive{} with a reason. \bounded{} is available only for an actually complete enumeration or qualified solver profile, not a passing random test.

Evidence binds request, definition, schema/dialect, query, ledger, expected/observed value, comparison, and budget. Content hashes preserve those bindings for replay and reproduction. Primary scientific labels are independently generated and frozen before arm execution. Selected inputs are fixed without reading those labels. A second independent replay checks the frozen operational decisions afterward, without revising primary labels or feeding back into search.

\begin{figure}[t]
\centering
\begin{tikzpicture}[font=\small,box/.style={draw,rounded corners,align=center,minimum height=0.9cm,text width=2.25cm},>=Stealth]
\node[box] (r) {Request and\\approved revision};
\node[box,right=0.65cm of r] (o) {Definition differences\\and obligations};
\node[box,right=0.65cm of o] (l) {Legal ledger\\construction};
\node[box,below=1.05cm of l] (d) {Read-only execution\\and frozen decision};
\node[box,left=0.65cm of d] (a) {Independent witness\\replay and audit};
\draw[->] (r.east)--(o.west);
\draw[->] (o.east)--(l.west);
\draw[->] (l.south)--(d.north);
\draw[->,dashed] (d.west)--(a.east);
\end{tikzpicture}
\caption{Primary labels are frozen independently before arm execution and remain hidden. The final replay checks frozen decisions without altering those labels or guiding search.}
\Description{Approved request and revision lead to definition differences, legal ledgers, and read-only execution. Independent replay subsequently checks the frozen operational decision; primary labels were independently frozen before this path.}
\label{fig:procedure}
\end{figure}

\subsection{Revalidation and compilation}
Relevant changes default to revalidation. Reuse requires a complete semantic dependency footprint and unchanged applicable request, constraints, dialect, and test scope. Omitting reuse does not imply that every revision requires rejection.

A native compiler provides a strong generation path. On fixed historical SQL, direct metric execution on $D_0$ provides a strong current-answer check. Counterexamples ask whether that query remains compatible away from $D_0$. Faithful compilation is not disadvantaged by withholding definitions or scoring unsupported ad-hoc requests as its failures.

\section{Implementation and Independent Validation}
\subsection{Policy profile}
The reference uses sales, payments, refunds, and date-indexed rational rates. Sales support net/gross amounts and posting/original-period refunds. Payment metrics use gross receipts with occurrence-period refunds; they do not infer tax decomposition of partial payments.

All ISO dates, identifiers, relationships, amounts, cumulative refunds, and funding at refund dates are checked. Identifier/revision strings require a non-whitespace character under the explicit ECMAScript whitespace set. Approved statuses are nonempty and duplicate-free with exact case-sensitive population matching; cancellation aliases are separately recognized case-insensitively and prohibit payment/refund events. Same-day payments fund same-day refunds. Base currency explicitly supplies a unit rate on consumed dates. BigInt interpretation returns unresolved on output overflow; arbitrary candidate SQL need not perform exact arithmetic.

The authored reference SQL uses SQLite integer arithmetic within the controlled small-value domain. Generated ledgers contain at most five sales, each with net amount at most 20,000 and tax at most 4,000 minor units, and rates from $1/2,3/4,1,5/4,3/2$. Its conversion expression has intermediate magnitude at most $2(24{,}000)(5)+4=240{,}004$. Each rounded event is at most 36,000 minor units and aggregate absolute magnitude is at most 720,000. Thus these reference expressions do not overflow SQLite integers; the argument is not extended to arbitrary safe-integer input magnitudes.

\subsection{Separately implemented evaluator}
The scientific evaluator computes policy rules event by event using decimal arithmetic and independently checks legality. It imports neither operational evaluator, generated reference SQL, nor core decision logic. Input facts and policy meaning are shared; executable adjudication logic is not.

Hand-calculated cases cover cross-period refunds, late recording, partial receipts, FX dates, half-unit rounding, invalid funding, and historical requests. Separately authored JS/Python vectors additionally cover nonempty/unique statuses, Unicode whitespace, strict revisions, cancellation aliases, invalid dates and numeric types. Agreement establishes cross-implementation consistency on these specified vectors. The two implementations belong to the same research project, and business approval of the underlying definitions remains a separate trust premise.

\subsection{Label freezing and alignment}
Queries are fixed before adjudication. The independent labeling procedure and resources are specified before detection runs. Primary labels are frozen blind to arm outputs. New arm-discovered evidence is a common secondary analysis, not a selective change to primary denominators.

Each label carries its data/SQL profile. Finite-domain agreement does not label a query globally correct. False rejection uses the same domain; an outside-domain witness is a scope-extension finding, not a false rejection of bounded conformance. Current-reference match alone does not establish conformance. Unresolved labels and detector abstention are separate.

\section{Controlled Validation}
\subsection{Objects and controls}
The assay uses authored reference queries and semantic mutations, not model outputs. Policy and data vary separately. A mutation can coincide with its reference on the base ledger, while negative controls leave semantics unchanged.

The five controlled arms are authored reference controls, not reproduced ParSEval, SpotIt+, PV-SQL, or MetricFlow systems. All share the same request, current/previous public policies, schema, deduplicated ledger bank and tie key. The single-database arm compares the current snapshot and is diagnostic. Generic, priority-weighted, current-obligation and equal-weight revision controls each execute eight distinct ledgers, including that snapshot; they differ in public-input selection scores. Every selected execution is retained, even after a discrepancy. No cache supplies an arm with uncharged comparison results. Current-obligation coverage omits changed-policy contrasts from its score, not from bank preparation or tie-breaking. End-to-end removal of previous-definition information was not executed in this controlled assay; its separately executed natural-query pilot control is reported in Section~\ref{sec:natural-robust}.

The revised bank includes policy-generated paired controls, public base snapshots and deterministic random ledgers. It excludes authored mutation witness ledgers as a bank source. The generator/coverage revision was motivated by the first development assay and editorial diagnostics, so this is a follow-up mechanism assay, not a blinded natural-error study or confirmation of superiority. The full finite bank labels only this recorded suite, not the legal business domain. Selection preparation, scientific labeling, and operational execution are recorded separately. The proposed strong combination remains unexecuted.

\ifControlledResults
\subsection{Recorded outcomes}
The assay contains \ControlledCases{} authored cases in \ControlledClusters{} scenario clusters and \ControlledBankLedgers{} deduplicated ledgers, giving \ControlledLedgerChecks{} query--ledger adjudication pairs. Operational and scientific metric values agree on \ControlledReferenceMatches{} of \ControlledReferenceChecks{} legal comparisons. Both implementations reject \ControlledInvalidRejected{} directed invalid inputs, including a revision mismatch as well as illegal ledgers. The full suite labels \ControlledBad{} queries unequal, \ControlledFiniteAgreement{} agreeing within the suite, and \ControlledUnknown{} unresolved.
\begin{table}[t]
\caption{Authored controlled assay. Detection uses 10 finite-suite unequal queries; rejection is evaluated on 14 finite-suite agreement queries, not globally proved queries. Time includes selection and all operational executions on this host. The current-snapshot control uses one ledger; the other controls use eight.}
\label{tab:controlled}
\begin{tabularx}{\linewidth}{@{}Yrrrrr@{}}\toprule
Control & Detected & Rejected & Unresolved & SQL calls & Seconds \\\midrule
Current snapshot only & 0/10 & 0/14 & 0 & 24 & 0.963 \\
Generic legal bank order & 6/10 & 0/14 & 1 & 192 & 7.525 \\
Definition-guided bank order & 10/10 & 0/14 & 0 & 192 & 7.810 \\
Current-obligation cover & 10/10 & 0/14 & 0 & 192 & 8.100 \\
Equal-weight revision cover & 10/10 & 0/14 & 0 & 192 & 7.602 \\
\bottomrule\end{tabularx}
\end{table}
\begin{table}[t]
\caption{Nested operational execution prefixes on the same development queries. Each detection denominator is 10; SQL calls are per arm. This is a budget manipulation check, not independent confirmation. All failures and full eight-ledger costs remain recorded.}
\label{tab:budget}
\begin{tabularx}{\linewidth}{@{}Yrrrrr@{}}\toprule
Ledgers & Generic & Guided & Current cover & Equal weight & SQL calls \\\midrule
1 & 0/10 & 0/10 & 0/10 & 0/10 & 24 \\
2 & 3/10 & 8/10 & 8/10 & 8/10 & 48 \\
4 & 4/10 & 10/10 & 10/10 & 10/10 & 96 \\
8 & 6/10 & 10/10 & 10/10 & 10/10 & 192 \\
\bottomrule\end{tabularx}
\end{table}

The revised generic and definition-guided orders detect \ControlledGenericDetected{} and \ControlledGuidedDetected{} of \ControlledBad{} suite-labeled discrepancies, respectively. The current-snapshot control detects none. Their selected sets are identical for \ControlledIdenticalSets{}/\ControlledCases{} queries, with pooled intersection/union counts of \ControlledIntersectionTotal{}/\ControlledUnionTotal{}; this ratio is descriptive, not an independent-sample estimate. At eight ledgers the guided arm covers all obligations available in this bank. The generic arm misses the FX, late-recording, aggregate-rounding and inclusive-end mutations; the rounding decision remains unresolved after an empty-set NULL output. SQL failures remain recorded even where another selected ledger establishes a discrepancy.

The current-obligation and equal-weight controls each select the identical full eight-ledger order for all \ControlledCases{} queries and detect 10/10 discrepancies. All three coverage orders have the same 0/8/10/10 detections at one/two/four/eight ledgers. Within the shared bank and tie rule, the suite supports obligation scoring over generic ordering, but provides no incremental detection advantage for changed contrasts in that score or 4/1 change priority. Each coverage arm has two recorded SQL failures, although other selected ledgers establish the discrepancies. Selection costs in this run are approximately 95\,ms priority-weighted, 80\,ms current-obligation, 88\,ms equal-weight, and 9\,ms generic; total arm times include every execution (Table~\ref{tab:controlled}). These are single-host measurements, not randomized timing comparisons.

The earlier frozen assay (001) contained 31 ledgers and 744 adjudication pairs, with both orders detecting 8/10 and one unresolved decision each. Its selected sets were identical for 20/24 queries, and the FX mutation was missed by both. Assay 002 used the same authored queries with revised bank/coverage and observed 10/10 versus 6/10; assay 003 adds the decomposition controls without changing those queries, the bank or weighted selection. All historical results remain archived. The three development rounds cannot be pooled as independent evidence or interpreted as held-out improvement estimates. Scientific full-bank labeling is common evaluation overhead, reported separately from operational arm execution.
\begin{table}[t]
\caption{Independently labeled examples: candidate SQL / requested metric, in base-currency minor units. Witnesses come from full-suite scientific labeling, not necessarily the eight-ledger operational selection.}
\label{tab:witnesses}
\begin{tabularx}{\linewidth}{@{}Yrr@{}}\toprule
Definition difference & Current ledger & Legal witness \\\midrule
Sales / receipts & 100/100 & 100/0 \\
Refund period & 100/100 & 100/80 \\
Tax treatment & 100/100 & 120/100 \\
FX date & 150/150 & 60/180 \\
\bottomrule\end{tabularx}
\end{table}

These measurements establish recorded policy/witness behavior and identify which tested component accounts for the observed detection, not natural agent error prevalence or production effectiveness. No population-level interval or native-method superiority claim is inferred from the authored fourteen-cluster development set.
% Derived from results/version-matrix-002; independent manifest SHA256 18834bcb74ac716a795ea5e7ab1e1ee0b4d2b6a4c64470b2ed95e7890d3126a1
\subsection{Old/new query--definition cross-product}
The separate development matrix contains 9 transition/control groups and 38 ledgers. Two physical SQL executions per group/ledger are compared to both requested definitions: 684 executions and 1368 logical query--definition comparisons, not 1368 SQL calls. The 684 unique policy/ledger values agree with separate Python adjudication. Both revision-mismatch directions are rejected before SQL dispatch in all 18 checks.
\begin{table}[t]
\caption{Unequal results among 38 legal ledgers per cell. Columns denote query/definition versions (O: old; N: new). Zero means agreement in this finite suite, not domain-wide equivalence. The legal-data profile is unchanged across versions.}
\label{tab:version-matrix}
\begin{tabularx}{\linewidth}{@{}Yrrrr@{}}\toprule
Transition/control & O/O & O/N & N/O & N/N \\\midrule
Sales / receipts & 0 & 15 & 15 & 0 \\
Refund attribution & 0 & 11 & 11 & 0 \\
Tax treatment & 0 & 20 & 20 & 0 \\
FX date & 0 & 3 & 3 & 0 \\
Eligible statuses & 0 & 3 & 3 & 0 \\
Combined changes & 0 & 21 & 21 & 0 \\
Metadata only & 0 & 0 & 0 & 0 \\
Unrelated metric & 0 & 0 & 0 & 0 \\
Historical request & 0 & 11 & 11 & 0 \\
\bottomrule\end{tabularx}
\end{table}
Matched-version cells agree throughout the finite bank. Off-diagonal discrepancies expose the specified semantic changes; metadata and unrelated-metric controls retain agreement. The historical group explicitly requests the old revision rather than treating every old query as invalid. The mismatched-request checks test evidence binding separately from semantic disagreement. These are authored reference queries on the original development bank, not natural migration, changed-constraint or new-schema transfer evidence.

\else
\subsection{Accounting}
The assay records every query, legality decision, result, operational decision, independent value, and failure. Comparative quantities derive from recorded evidence. Authored cases remain separate from natural-query evaluation.
\fi

\subsection{Failures}
Generation and execution failures retain their original denominators. Unknown policies, revision mismatch, unsupported shape, invalid data, and resource exhaustion are separated. Rejecting everything does not demonstrate diagnostic effectiveness. Disagreement between policy implementations blocks verified-evidence release pending clarification; frozen labels and attempts remain preserved rather than silently rewritten.

\subsection{Upstream qualification}
We separately executed fixed-source ParSEval and PV-SQL verification components without model calls. This is tool qualification, not a full business-policy comparison. ParSEval produced a distinguishing threshold database that was independently replayed in SQLite. A refund-join diagnostic returned equality despite a separately replayable difference. The latter outcome is retained as a missed discrepancy in that tested profile, not interpreted as proof of equivalence. A cross-table refund bound also cannot be expressed by a direct SQLite CHECK subquery; it requires an explicit business constraint implementation. Consequently, a qualified schema-only test does not certify the full monetary and temporal domain.

A subsequent business adapter invokes the original generator on both frozen candidate/reference SQL and applies the same public ledger legality check before accepting a discrepancy. Eight authored queries across the four core policy families were each given a 2,000\,ms shared qualification budget. All eight outcomes were unresolved: generated sale dates included malformed dates or unsupported timestamp forms, so no captured candidate entered legal SQL replay within the budget. These discarded candidates, generator starts, and timeouts remain recorded. This result describes adapter support in that profile; it is not an effectiveness ranking or a completed full-constraint baseline.

We also qualified the native MetricFlow v0.213.0 engine at fixed source revision \texttt{4200f85}.%
\footnote{Official source: \url{https://github.com/dbt-labs/metricflow/tree/v0.213.0}; Apache-2.0 license. Software provenance is separate from the scholarly reference count.}
Its original semantic-manifest transformation, engine, and DuckDB SQL-plan renderer generated the executed queries, rather than an author-written metric compiler. Two sales requests with date and status restrictions produced net and gross totals of 300 and 360 integer minor units, respectively; both matched the separate policy implementation. This qualification covers native sales aggregation in that adapter, not refund attribution, payments, late knowledge, or non-identity exchange rates. Native query generation is a separate track from checking a fixed historical SQL query, and these two cases do not enter the controlled detection denominator.

\section{Natural Local-Model Pilot}
\label{sec:natural}
This pilot replaces authored mutations with SQL written by language models. It remains an authored-family exploratory study: revision families, requests and ledgers were constructed by the authors, prompts were tuned on a separate development split, and no confirmatory hypothesis was registered.

\subsection{Families, candidates and labels}
Each family pairs revisions v1 and v2 of one metric under one request. Twenty test families change refund attribution, tax mode, FX date, eligible statuses, basis, or several fields jointly; four are controls that change only documentation or an unrelated catalog entry. A current snapshot $D_0$ is drawn from a seeded legal generator using only the public definitions. Semantic test families alternate between snapshots on which v1 and v2 coincide and snapshots on which they differ, assigned by position before any model call. Four further development families were used only for prompt and parser checks.

For every family, a model writes one query for an explicit v1 request (\emph{old}) and one for v2 (\emph{new}). The prompt contains the target definition only, the schema, the request, the SQLite sandbox restrictions and a preview of $D_0$; it never mentions the other revision. Two model--runtime bundles were qualified: \texttt{openai/gpt-oss-20b} through the OpenAI-compatible chat interface with default reasoning, and \texttt{google/gemma-4-12b-qat} through raw completion with its canonical template and thinking disabled. Two further local models produced no SQL within 6k--12k reasoning tokens and were excluded at qualification. Decoding uses temperature 0. A task allows the first answer and at most one repair, triggered only by a public execution failure, NULL, or missing SQL on $D_0$. The \NatTasks{} test tasks used \NatGenCalls{} calls (\NatRepairs{} repairs), with one truncated response and no transport failure.

Development exposed two prompt problems, fixed before the test split was generated. Nearly every development query applied SQLite integer division before rounding; a dialect note alone did not change this behavior. The final prompt therefore states the approved per-event conversion expression. Rounding is identical in both revisions and is not the semantics under study, so this supplies a governed formula rather than the answer to a revision question. All three development rounds are archived.

Primary cells are old$\to$v1 (historical request), old$\to$v2 (carried query after adoption of v2), and new$\to$v2, giving \NatCells{} test cells. The Python evaluator and a Python SQLite executor label every cell on 374 legal ledgers per family: the 74-ledger pool from which checkers select, and 300 further ledgers never offered to any checker. A legal difference labels a cell nonconforming. Agreement on every ledger is finite-suite agreement, and execution failures without a difference are unresolved. Labels were hashed before any checker or judge ran.

\subsection{Checkers and judges}
The five ledger arms reuse the controlled-validation implementation without change: the snapshot, a generic legal order, the obligation-guided order, and its current-obligation and equal-weight ablations, each executing eight ledgers from the 74-ledger pool (the snapshot executes $D_0$ only). The previous revision supplied to selection is v1 for all cells. The LLM-judge baseline gives each bundle the same definition, request, schema, snapshot preview and SQL, plus the previous definition for carried cells, and requests a binary verdict. Each judge makes one call per primary cell with no retry, \NatJudgeCalls{} calls in total; malformed verdicts count as abstentions.

An independent checker verified freeze hashes, the complete task denominator, the attempt rule, binding of labels to the exact SQL, recomputation of every arm decision from its executions, that every arm rejection has a nonconforming label, and judge completeness (\NatChecks{} checks, all passing). The JavaScript runtime and the Python evaluator/executor agree on all \NatCrossAgree{} of \NatCrossCompared{} arm executions.

\subsection{Results}
\NaturalLabelTable
Table~\ref{tab:natural-labels} shows that natural errors are common under explicit definitions: \NatBad{} of \NatCells{} cells are nonconforming, \NatAgree{} agree throughout the suite and \NatUnresolved{} are unresolved. \NatLatent{} nonconforming cells return the defined value on $D_0$, so a current-answer check cannot see them. After adoption of v2, \NatCarriedBad{} of \NatCarriedCells{} carried queries are nonconforming. Among the twenty carried semantic-change cells whose families have coinciding snapshots, \NatCoincideSemBad{} are nonconforming and the snapshot detects only \NatCoincideSemSnapshot{}.

\NaturalDetectionTable
Table~\ref{tab:natural-detection} reports detection. The snapshot detects \NatSnapshot{} of \NatBad{}, the generic order \NatGeneric{}, and the obligation-guided order \NatGuided{}. The whole 74-ledger pool reaches \NatPool{}; the remaining two discrepancies occur only on label-only ledgers. No ledger arm rejects an agreeing cell, because every rejection carries a replayable legal witness. Paired over the nonconforming cells and resampled by family (\NatClusters{} families containing such cells), the guided minus generic detection difference is \NatGuidedVsGeneric{} percentage points with a 95\% bootstrap interval, guided minus snapshot is \NatGuidedVsSnapshot{}, and generic minus snapshot is \NatGenericVsSnapshot{}. The current-obligation and equal-weight ablations reproduce the guided detections exactly, confirming on natural queries that the benefit comes from obligation coverage rather than revision-specific scoring.

With the original prompt, both LLM judges label nearly every query as nonconforming, including \NatJudgeGptFR{} (gpt-oss-20b) and \NatJudgeGemmaFR{} (gemma-4-12b) of the \NatAgree{} agreeing cells. Their high detection therefore reflects a near-constant verdict rather than discrimination, and their reasons are not replayable evidence. Section~\ref{sec:natural-robust} reports development-calibrated judges.

A descriptive post-label diagnosis neutralizes one data feature at a time. The most frequent defect is arithmetic over separately aggregated sales and refund subtotals: with no qualifying refunds, the refund subtotal is NULL, the difference becomes NULL, and an outer \texttt{COALESCE} returns zero. Further defects involve refund filtering, FX dates and foreign currency, case-variant statuses, and the knowledge cutoff. Carried queries additionally match the previous revision in eight cells. Many of these defects are invisible on a snapshot that happens to contain refunds, rates of one, or no late records.

\subsection{Stronger controls and repeated samples}
\label{sec:natural-robust}
\NaturalJudgeTable
\paragraph{Calibrated judges.} To give the judge baseline a fair chance, two revised prompts were tuned on the development families only: a definition-clause checklist that asks for rejection only with a concrete counterexample, and the same checklist plus the query result and the definition value on $D_0$, both of which are public to every checker. A written rule, fixed before any revised test verdict, selected one primary variant per judge by development detection minus false-rejection rate. Table~\ref{tab:natural-judges} shows the test outcomes. The selected gpt-oss-20b variant reduces false rejection to \NatJGptCheckDzeroFR{} of \NatAgree{} but detects only \NatJGptCheckDzeroDet{} of \NatBad{}, close to the snapshot check it partly contains. The selected gemma-4-12b variant still rejects \NatJGemmaCheckFR{} agreeing cells. Calibration therefore moves these judges along a detection/false-rejection trade-off without reaching the eight-ledger check, which detects \NatGuided{} with no false rejection and returns a replayable witness for each rejection.

\paragraph{Target-definition-only control.} The controlled ablations kept the revision-informed bank and tie key. An end-to-end control removes the previous revision entirely: bank templates and the tie key are derived from the target definition and request only, and the obligation score uses no changed-revision contrast. Because every family contributes both revisions as targets, the resulting 74-ledger pool coincides with the original one; selection orders still change for most cells. Obligation-guided detection remains \NatNoPrevGuided{} of \NatBad{} and generic detection is \NatNoPrevGeneric{}, both without false rejection (guided minus generic \NatNoPrevContrast{} percentage points). At the evaluated budget, target-definition-only coverage matches the revision-informed coverage's detection count on these recorded queries.

\paragraph{Repeated samples.} Each test task was regenerated three times at temperature 0.7 with fixed seeds (\NatSampTasks{} tasks, \NatSampCalls{} calls, no transport failure), giving \NatSampCells{} primary cells labeled with the same suite. \NatSampBad{} are nonconforming, \NatSampAgree{} agree and \NatSampUnres{} are unresolved; \NatSampLatent{} nonconforming cells agree on $D_0$. Snapshot, generic and obligation-guided checks detect \NatSampSnapshot{}, \NatSampGeneric{} and \NatSampGuided{}, with no false rejection (guided minus generic \NatSampGuidedVsGeneric{}; guided minus snapshot \NatSampGuidedVsSnapshot{}). Correctness varies across samples: of \NatSampGroups{} model/family/cell groups, \NatSampAllBad{} have three nonconforming samples, \NatSampNoneBad{} none, and \NatSampMixed{} are mixed; the temperature-0 label matches the three-sample majority in \NatSampMajorityMatch{} groups. Single-sample error rates should therefore be read as one draw from a variable producer.

An earlier attempt at this collection was stopped after concurrent requests for the other model made the local server evict the running model, so that 72 tasks failed with server errors. That record is preserved and excluded as an environment failure; the reported run was recollected under the same frozen protocol with one model active at a time.

\subsection{Pilot frame and inference}
The families, schema, requests and ledgers are authored. The two evaluated local-model bundles produce one deterministic sample per task in the primary analysis and three sampled answers in the robustness analysis. The 24 test families form the clustering unit; the 144 cells are not independent observations, and the intervals describe this family set, not a population of business definitions. Finite-suite agreement is not domain-wide correctness. The prompt supplies the approved conversion formula, which removes one natural error class by design. Native constraint-aware tools, external schemas and changed legal domains were not part of this pilot.

\section{Held-Out Local Confirmatory Study}
\label{sec:confirmation}
\subsection{Frozen design and chronology}
A dated, source-bound preregistration fixes the primary comparison before the held-out generation. Thirty-two new revision families H01--H32 were constructed independently of the pilot's prompt and judge tuning cases. They include 26 semantic revisions and six metadata-only or unrelated-metric controls, and add a two-month request window absent from the pilot test set. Both local bundles, the generation prompts, temperature 0, seed 26 and the eight-ledger checking budget are retained. Each of 128 generation tasks admits at most one repair after a public current-snapshot failure, producing \ConfCells{} primary query--definition cells. No LLM judge participates in this confirmation.

Two definitions (H19 and H25) were corrected before freeze because the generator found no snapshot satisfying their assigned coincidence stratum; final strata are 19 coinciding and 13 separating families. The selection pool contains 81 deduplicated legal ledgers, whereas the preregistration text retained the pilot's size of 74. This discrepancy is disclosed: the frozen deterministic bank construction determines the actual pool, and every arm receives the same 81 candidates and eight selected-test allowance. Each family additionally has 300 label-only ledgers. These pre-data corrections and the pool discrepancy are retained in the archived protocol.

The primary endpoint is detection among independently nonconforming cells. Paired guided-minus-generic differences are summed within revision family across both bundles and cell types; the registered two-sided exact sign-flip test uses the nonzero family sums at $\alpha=0.05$. The interval resamples whole families. The hypothesis concerns this authored family frame. The exact-test interpretation assumes sign symmetry under the null; it does not establish sampling from a population of businesses. Unresolved cells remain in the full study denominator.

\subsection{Results and independent replay}
\begin{table}[t]
\caption{Preregistered local confirmation: 192 cells from 32 new revision families and two model bundles. Detection is conditional on 144 finite-suite nonconforming cells; false rejection is evaluated on 36 agreeing cells. Twelve unresolved cells remain in the full denominator.}
\label{tab:heldout-confirmation}
\centering\small
\begin{tabular}{@{}lrr@{}}\toprule
Check & Detected / 144 & False rejection / 36 \\\midrule
Current snapshot & 81 & 0 \\
Generic legal order & 127 & 0 \\
Obligation-guided order & 137 & 0 \\
Current-obligation scoring & 137 & 0 \\
Equal-weight scoring & 137 & 0 \\
\bottomrule\end{tabular}
\end{table}

The complete frame contains \ConfBad{} nonconforming, \ConfAgree{} finite-suite-agreeing and \ConfUnresolved{} unresolved cells. Of the nonconforming cells, \ConfLatent{} agree on the current snapshot; \ConfCarriedBad{} of \ConfCarriedCells{} carried old queries are nonconforming under v2. The snapshot, generic and obligation-guided checks detect \ConfSnapshot{}, \ConfGeneric{} and \ConfGuided{}, respectively (Table~\ref{tab:heldout-confirmation}). All five arms have zero false rejection among the \ConfAgree{} agreeing cells. Guided-minus-generic detection is \ConfDifference{} percentage points, with a family bootstrap interval of \ConfCI{} and exact $p=\ConfP$: eleven cells favor guided checking, one favors generic checking and 132 tie. Seven families have nonzero sums, all positive.

Current-obligation and equal-weight scoring also detect \ConfGuided{}. Their equality preserves the component interpretation from the pilot: these data support obligation coverage under the tested budget and do not establish a gain from revision-specific scores or weights. The full selection pool detects 140 of 144 discrepancies; four require label-only ledgers.

The recorded generation uses \ConfGenCalls{} calls, \ConfRepairs{} repairs, \ConfTruncated{} truncated responses and zero transport errors. Truncations and unresolved outcomes remain recorded. The checker passes 1,970 checks and compares \ConfCompareSQL{} arm executions between the JavaScript operational interpreter and the independently implemented Python evaluator. During manuscript integration, the checker, paired aggregation and full finite-suite SQL labels are recomputed from the supplied raw records without new model calls.

\subsection{Recorded heterogeneity, budget and computation}
\label{sec:offline-confirmation}
The following descriptive analyses were added after the confirmation results were available. They reaggregate the same frozen labels and charged execution records; they do not replace the registered endpoint, choose new parameters or execute additional queries.
% Derived only from archived observations; post hoc, not a new experiment.
\begin{table}[t]
\caption{Confirmation stratified by producer and revision axis. Entries are detected/nonconforming cells; revision axes are family attributes, not inferred SQL error causes.}\label{tab:confirm-strata}
\centering\small
\begin{tabularx}{\linewidth}{Yrrr}
\toprule
Stratum & Snapshot & Generic & Coverage\\
\midrule
gpt-\allowbreak{}oss-\allowbreak{}20b & 42/80 & 70/80 & 78/80\\
gemma-\allowbreak{}4-\allowbreak{}12b & 39/64 & 57/64 & 59/64\\
tax & 11/18 & 17/18 & 18/18\\
refundAttribution & 6/19 & 14/19 & 18/19\\
fxDate & 12/22 & 20/22 & 20/22\\
statuses & 11/17 & 16/17 & 17/17\\
basis & 10/15 & 15/15 & 15/15\\
joint & 17/27 & 24/27 & 27/27\\
metadata\_control & 5/13 & 10/13 & 11/13\\
unrelated\_control & 9/13 & 11/13 & 11/13\\
\bottomrule
\end{tabularx}
\end{table}
\begin{table}[t]
\caption{Post-hoc detection at prefixes of the already executed confirmation orders. Counts use the same 144 nonconforming cells; no new SQL is executed.}\label{tab:confirm-budget}
\centering\small
\begin{tabularx}{\linewidth}{Yrrrr}
\toprule
Order & 1 & 2 & 4 & 8\\
\midrule
Snapshot & 81 & 81 & 81 & 81\\
Generic & 81 & 99 & 113 & 127\\
Obligation coverage & 81 & 118 & 129 & 137\\
Current-obligation & 81 & 117 & 129 & 137\\
Equal-weight & 81 & 118 & 129 & 137\\
\bottomrule
\end{tabularx}
\end{table}
\begin{table}[t]
\caption{Recorded confirmation checking work. All selected executions were charged, including those after a discrepancy. Times are shared-host sums, not randomized speed comparisons.}\label{tab:confirm-cost}
\centering\small
\begin{tabularx}{\linewidth}{Yrrr}
\toprule
Arm & Selection (s) & Execution (s) & Comparisons\\
\midrule
Snapshot & 0.00 & 11.15 & 192\\
Generic & 0.08 & 87.94 & 1536\\
Obligation coverage & 2.00 & 87.89 & 1536\\
Current-obligation & 1.73 & 88.25 & 1536\\
Equal-weight & 1.98 & 88.33 & 1536\\
\bottomrule
\end{tabularx}
\end{table}
\newcommand{\CovLooMin}{4.3}
\newcommand{\CovLooMax}{7.2}

Table~\ref{tab:confirm-strata} separates producer bundles and revision axes. Coverage detects 78/80 discrepancies from gpt-oss-20b and 59/64 from Gemma, versus 70/80 and 57/64 for generic ordering. The revision-axis rows classify the authored family, not the cause of a generated SQL error: even documentation-only controls can contain a query-generation error. The ten net additional detections span seven families; H07 contributes four. Removing each whole family in turn retains a positive paired difference, ranging from \CovLooMin{} to \CovLooMax{} percentage points. This sensitivity describes the recorded frame, not 32 new tests or population generalization.

The prefixes in Table~\ref{tab:confirm-budget} show where the fixed orders obtain evidence. At two ledgers, guided coverage detects 118 discrepancies versus 99 for generic ordering; at four, the counts are 129 versus 113. Current-obligation coverage detects 117 at two and matches 129 and 137 at four and eight. Thus one shorter prefix differs, while the registered eight-ledger result still identifies general coverage rather than an extra revision-priority benefit. No budget was selected from this curve, and the shorter prefixes are not independent experiments.

Table~\ref{tab:confirm-cost} includes the same 192 cells in every arm, not only the 144 labeled nonconforming cells. Coverage preparation costs more than generic ordering, while all four bank-order arms charge 1,536 executions. The 6,336 total comparisons include the 192 snapshot comparisons. Generation used 153 calls and approximately 1,849.8 seconds of summed response latency; scientific labeling took 32.28 seconds in its own stage. These are separate resource categories, not disjoint wall-clock components to sum into an end-to-end deployment price. Shared bank construction is common preparation; finite-suite labeling is evaluation overhead, not an input given to operational checking. A smaller selected prefix does not imply proportional total savings because preparation is still needed.

\subsection{Protocol chronology and recorded differences}
\begin{table}[t]
\caption{Confirmation chronology. Times are recorded local UTC$-07{:}00$ timestamps, not independent timestamp attestations. Source bindings and all original records are retained.}
\label{tab:confirmation-chronology}
\centering\small
\begin{tabularx}{\linewidth}{p{0.22\linewidth}YY}
\toprule
Object & Frozen/executed record & Consequence\\
\midrule
H19/H25 definitions & Registration documents pre-freeze corrections after no compatible assigned snapshot was found & Final 19 coinciding/13 separating strata; no outcome-conditioned replacement is reported.\\
Protocol and generation & Protocol creation: 12 Oct., 14:36:40; first producer START: 16:59:13 & Recorded ordering places protocol before collected generation.\\
Candidate bank & Registered text: 74+300 ledgers; executed pool: 81+300 & A literal size discrepancy; all arms share the 81-candidate pool and eight-ledger limit.\\
Labels and checking & Label digest recorded before arm START at 17:30:55 & Labels remain independent of selection and all 192 primary cells remain in scope.\\
Editorial reaggregation & Added 13 Oct. from existing records only & Stratification, prefixes and leave-one-family-out are post hoc; registered inference is unchanged.\\
\bottomrule
\end{tabularx}
\end{table}
The 81-ledger bank follows the bound deterministic construction, but differs from the written size. Disclosure of the common bank does not erase that deviation. Source snapshots, protocol, generation events and label bindings let a reader distinguish the pre-data corrections from subsequent descriptive analysis. No raw chronology record was revised during manuscript editing.

\paragraph{External scope.} The local confirmation completes the authored-family study, not native full-constraint ranking or independent-schema transfer. The prospective extension is kept separate in Appendix~\ref{app:external-extension}; no part of that design is counted as an executed result.

\section{Discussion}
\subsection{Interpreting discrepancies}
A legal witness identifies a failure on its instance and can localize date, amount, or population mismatch. It does not necessarily make the present answer wrong. Current agreement is useful evidence about $D_0$, not every admissible warehouse state.

\subsection{Definition fidelity and application}
Definitions require business approval. If policy is wrong or underspecified, comparison cannot establish business truth. Disagreement calls for clarification, not silently edited rules.

A generic interface returns result, revision, state, and evidence. Rejection requests correction; unresolved cases request clarification; finite agreement displays scope. Existing business/authorization processes decide publication. The verifier neither authorizes more access nor consumes client schemas.

\subsection{Validity of comparisons}
Authored mutations measure mechanism coverage; natural queries measure another distribution. External schemas with authored revisions do not alone establish naturally occurring drift. Separate strata give each conclusion an identifiable source.

The empirical contribution is obligation coverage under a fixed verification budget, together with an auditable temporal/monetary revalidation profile. In the authored suite and the registered confirmation, current-obligation scoring matches the eight-ledger detection of revision-priority scoring, conditional on the shared bank and tie key. The previous-information-free pilot control preserves coverage's benefit (Section~\ref{sec:natural-robust}). The new-family confirmation reports a \ConfDifference{}-percentage-point gain over generic ordering; post-hoc prefixes and leave-one-family-out descriptions explain its concentration while retaining that main result. Equal detection counts identify coverage, rather than revision-specific weighting, as the effective tested component. Native full-constraint verifier outcomes belong to their separate qualification track. Development-calibrated model verdicts offer a different detection/false-rejection trade-off, while ledger checking attaches a replayable witness to each rejection.

The scientific decision object is a fixed analytical query under an approved policy revision and knowledge cutoff. Recording, replay and integrity machinery bind each decision to that request, its legal witnesses and its independently checked outputs.

\section{Conclusion}
Versioned business semantics require revalidation of executable analytics against explicitly requested definitions. Legal witnesses, knowledge cutoffs and revision binding separate present-answer agreement from query conformance. In the authored development suite, obligation coverage detects discrepancies missed by generic ordering, while current-only and equal-weight controls match the priority-weighted result. The old/new matrix additionally separates semantic changes from metadata changes, unrelated definitions and valid historical requests. The two local-model studies expose definition errors that are invisible on the current snapshot and show how legal witnesses detect them in deterministic and sampled queries. The target-definition-only pilot control matches the revision-informed coverage result at the evaluated budget. Calibrated judge configurations exhibit a separate detection/false-rejection trade-off, whereas ledger checking provides replayable evidence for each rejection. The preregistered confirmation on \ConfFamilies{} new authored families detects \ConfGuided{} of \ConfBad{} discrepancies versus \ConfGeneric{} for generic ordering, with zero false rejection among \ConfAgree{} agreeing cells and a family sign-flip $p=\ConfP$. Current-obligation and equal-weight scoring retain the same detection. Together these results establish an executable, auditable revalidation profile and identify obligation coverage as the component responsible for the observed detection benefit. Its empirical estimates apply to the declared authored schemas, legal domains, producer configurations and verification budgets.

\appendix
\section{Prospective External Extension}
\label{app:external-extension}
The local confirmation of Section~\ref{sec:confirmation} completes the authored-family component. The external-provider, external-schema and native full-constraint comparisons below remain unexecuted. Their model sources, schemas, sample sizes and operational resources must be frozen before their respective collection.
\subsection{Questions and strata}
The empirical extension asks whether change conditioning improves detection at comparable false rejection, abstention, and computation; whether historical requests and non-semantic controls are handled correctly; whether gains transfer to unseen combinations/schemas; and what total lifecycle cost is.

The planned candidates will be frozen outputs from two independent strong-model providers and at least one open-weight model, extending the two local bundles of the pilot. All will receive complete policy and explicit revision and will not be asked to make mistakes. Generation/repair allowances will be specified prospectively and all attempts retained.

EntSQL, Spider 2.0, and BIRD can supply independent schemas and contexts~\cite{liao2026entsql,lei2025spider2,li2023bird}. Researcher-written version policies are identified as overlays rather than native gold. License and execution availability precede inclusion. Parent schemas and policy combinations separate development and held-out tests.

\subsection{Matched strong baselines}
Primary controls include legal random/property tests, full-constraint plan-aware or verification tests, and static checks plus metric execution plus generic witnesses. Syntax success is diagnostic, not the decisive baseline. GROUND-style checks are adaptations unless the original system is reproduced; PV-SQL verifier-only checks are not its full pipeline~\cite{tian2026pvsql}.

Reference-requiring tools receive a reference from the same definition. No arm receives richer join rules, time conventions, or retries. Unsupported constructs/timeouts remain in support matrices. An upstream equality response retains its documented testing or proof scope.

\subsection{Endpoints and statistics}
Detection is rejection among independently labeled unequal queries. False rejection is measured among queries conformant within the identical profile. Unrefuted errors, method abstention, and unresolved labels have separate denominators. Actual coverage and uncertainty accompany calibrated false-rejection targets.

An independent scenario/parent task is the sampling unit. Versions, producer outputs, seeds, and arms remain in one paired cluster. Development, calibration, and test cannot share parents. Cluster-level intervals describe paired differences; repeated executions do not enlarge independent sample size.

Pilot data determine feasibility, labeling coverage, error prevalence, cluster variability, and cost, not confirmation tasks selected because a method wins. Sample size and stopping rules freeze before confirmation outcomes using target precision and approved resources.

\subsection{Costs and ablations}
Costs include generation, solving, legality, invalid candidates, metric/SQL execution, model judgment, retries, failures, and timeouts. Preparation, per-query verification, and revision updates are separate. Model calls record input/output tokens and provider accounting.

The executed development ablations remove changed-policy contrasts from obligation scoring and changed-obligation priority separately, holding the revision-informed bank and tie key fixed. The previous-information-free pilot control is completed in Section~\ref{sec:natural-robust}; independent-schema transfer, business-legality and resource-aware choice ablations remain unexecuted. Query guidance and selective reuse are evaluated only if implemented. Repair is a separate endpoint with full costs and independently adjudicated final answers, not inferred from rejection.

\section{Illustrative Policies}
A January sale of 100 minor currency units has a matching payment. Without refunds, posting/original-period definitions both return 100. A funded February refund of 20 minor units, known by the cutoff, leaves January at 100 under posting attribution and changes it to 80 under approved restatement. This hand-computed example is not a model result.

Tax uses net/tax components; payment policies do not infer partial-payment tax. Event-date FX selects the payment/refund date, whereas sale-date FX selects the associated sale date. Equal-rate and unequal-rate controls avoid testing only obvious differences.

\section{Records and Reproduction}
Records bind origin, parent scenario, policy/revision, interval/cutoff, dialect, ledger, legality, raw output, metric value, decision, failure, and cost. Scientific labels carry their own values and profiles. Missing records cannot be repaired by favorable results.

Development writes new directories with source/input digests. Confirmation freezes protocol and labels before decisions. Clean reproduction checks raw records and full denominators. Public packages allowlist manuscripts and licensed, sanitized artifacts, excluding credentials, client data, private architecture, and submission planning. The locally retained evidence includes generated pilot, robustness and confirmation tables, archived local-model output, frozen banks, independent labels, and checker results. The public source candidate contains the manuscript dependency closure; raw experimental material requires a separately reviewed distribution package. No public artifact identifier has been assigned.

\subsection{Controlled protocol and resource profile}
The revised controlled bank deduplicates public base snapshots, policy/request-generated event templates and sixteen ledgers from the original fixed seed. Reference and mutation partners see identical public information and selected ledgers. No arm sees mutation tags, fixture families, candidate SQL or scientific values during selection; generation also cannot access authored witness ledgers. Getter-trap and metadata-perturbation regressions test this boundary. The dated and monetary templates remain development choices, not a new sampling distribution of real tasks.

Each SQL invocation starts a new Python process with an ephemeral SQLite database. The helper admits SELECT/CTE queries, enables query-only access, applies a native authorizer and a function whitelist, and limits virtual-machine steps. The recorded configuration uses a 1,500\,ms process limit and 100,000 VM steps. Multi-statement, database attachment, environment access, non-scalar, NULL, unsupported numeric range, and resource failures return explicit unresolved results. These controls qualify the research executor, not an arbitrary production deployment.

All eight selected tests are executed in each of the four bank-order arms; finding an early discrepancy does not remove later costs. The snapshot arm executes one test and is reported separately. The script records selection time, each policy/SQL invocation, and common bank preparation. Scientific label construction is additional evaluation overhead, not free operational preparation.

\subsection{Independent check and provenance}
Before collection, the runner captures the source closure and refuses to finalize evidence if it changes. The checker verifies identities, full Cartesian label coverage, selected-input coverage, source bindings, freeze ordering, and counts. It reruns the separate policy evaluator and SQL, replays public selection, and independently reaggregates overlap, missed witnesses and budget curves. Obligation coverage uses the operational policy interpreter and is therefore a manipulation check, not independent scientific evidence of policy correctness. Primary labels are recomputed from the full suite and rejections require a selected legal discrepancy.

Deletion, duplication, value tampering, malformed execution accounting, and a witness outside the selected subset are checked as integrity regressions. A failed or existing receipt is never overwritten. Derived manuscript counts require a current passing receipt and matching raw/source digests; evidence collection and document generation are separate steps.

\section{Business Policy and Schema}
\begin{table}[htbp]
\caption{Implemented policy fields. Revision and request applicability are independent of representation-format versions.}
\begin{tabularx}{\linewidth}{@{}lY@{}}\toprule
Field & Meaning \\\midrule
Metric / revision & Explicit identifier and strictly matching string or integer revision \\
Basis & Sales recognition or gross payment receipts \\
Refund attribution & Posting period or original sale period for sales; posting for receipts \\
Tax mode & Net or gross sales; no inferred partial-payment tax decomposition \\
FX date & Original sale date or contribution event date \\
Population & Explicit eligible sale statuses \\
Rounding & Positive event half-up in base-currency minor units, then signed summation \\
Interval / cutoff & Half-open report dates and separate occurrence/recording knowledge cutoff \\\bottomrule
\end{tabularx}
\end{table}

Sales contain a transaction key, customer key, occurrence and recording dates, net/tax amounts, currency, and status. Payments/refunds link to sales; rates use a unique currency/date key and positive integer numerator/denominator. Customer identifiers may repeat and keys are unique within their own relations. This schema was authored for the study and does not reproduce a client or company database.

Legality enforces per-sale collected gross amounts, component-wise net/tax refund ceilings, and cumulative funding available by each refund date. The base currency has explicit unit-rate entries. Future stored events remain structurally legal but cannot contribute before occurrence and recording by the requested cutoff. Policy mismatch is an invalid request, not proof that the warehouse is corrupt.

\section{Support Boundaries}
Explicit research policies do not infer authoritative accounting treatment. Multi-column outputs, nondeterminism, unsupported dialects, unspecified time zones, missing rates, and numeric overflow are unresolved until qualified. Empty intervals/populations follow the explicit zero-result rule.

\bibliographystyle{ACM-Reference-Format}
\bibliography{references}
\end{document}